\newtheorem{theorem}{Theorem}
\def\BibTeX{{\rm B\kern-.05em{\sc i\kern-.025em b}\kern-.08em T\kern-.1667em\lower.7ex\hbox{E}\kern-.125emX}}
\DeclareMathOperator*{\argmin}{arg\,min}
\title{Decentralized Smoothing ADMM for Quantile Regression with Non-Convex Sparse Penalties}
\author{Reza Mirzaeifard~\IEEEmembership{Student Member,~IEEE}, 
        Diyako Ghaderyan, 
        Stefan Werner~\IEEEmembership{Fellow,~IEEE}
\thanks{Stefan Werner and Reza Mirzaeifard are with the Department
of Electronic Systems, Norwegian University of Science and Technology-NTNU, Norway,
Trondheim, 7032 Norway (e-mail: \{stefan.werner, reza.mirzaeifard\}@ntnu.no). Stefan Werner is also with the Department of Information and Communications Engineering, Aalto University, 00076, Finland. 
Diyako Ghaderyan is with the Department of Information and Communications Engineering, Aalto University, 00076, Finland (e-mail: diyako.ghaderyan@aalto.fi).
This work was partially supported by the Research Council of Norway and the Research Council of Finland (Grant 354523).}
}
\begin{document}
\maketitle
\begin{abstract}
In the rapidly evolving internet-of-things (IoT) ecosystem, effective data analysis techniques are crucial for handling distributed data generated by sensors. Addressing the limitations of existing methods, such as the sub-gradient approach, which fails to distinguish between active and non-active coefficients effectively, this paper introduces the decentralized smoothing alternating direction method of multipliers (DSAD) for penalized quantile regression. Our method leverages non-convex sparse penalties like the minimax concave penalty (MCP) and smoothly clipped absolute deviation (SCAD), improving the identification and retention of significant predictors. DSAD incorporates a total variation norm within a smoothing ADMM framework, achieving consensus among distributed nodes and ensuring uniform model performance across disparate data sources. This approach overcomes traditional convergence challenges associated with non-convex penalties in decentralized settings. We present convergence proof and extensive simulation results to validate the effectiveness of the DSAD, demonstrating its superiority in achieving reliable convergence and enhancing estimation accuracy compared with prior methods.
\end{abstract}
 \begin{IEEEkeywords}
 Distributed learning,
 quantile regression, non-convex and non-smooth sparse penalties, weak convexity
 \end{IEEEkeywords}
\IEEEpeerreviewmaketitle
\section{Introduction}
\label{sec:intro}
The expansion of cyber-physical systems in the internet-of-things (IoT) ecosystem has prompted a shift from centralized to decentralized data processing, primarily due to the immense computational demands, bandwidth constraints, and privacy issues associated with central data aggregation \cite{chen2019quantile,zhou2022admm}. In this context, decentralized learning has emerged as a pivotal methodology, allowing collaborative model training across dispersed devices while maintaining data locality, thereby addressing significant privacy and bandwidth concerns \cite{yin2021comprehensive,zhou2023decentralized}.

A persistent challenge in decentralized learning is the presence of outliers, particularly in heavy-tailed data distributions. These outliers can severely skew model accuracy and reliability \cite{zhao2022participant,tsouvalas2022federated}. Traditional linear regression methods, which estimate the conditional mean, often fall short in such conditions due to their sensitivity to outliers. In contrast, quantile regression, which estimates conditional quantiles rather than means, provides a more robust framework in these scenarios, offering valuable insights into the underlying data distribution across various quantiles \cite{yu2020probabilistic,taieb2016fore}.

To enhance the sparsity and interpretability of models, particularly in high-dimensional data scenarios, non-convex penalties like the minimax concave penalty (MCP) \cite{fan2001variable} and smoothly clipped absolute deviation (SCAD) \cite{zhang2010nearly} have been recognized for their ability to reduce bias more effectively than $l_1$ penalties. These non-convex penalties improve model accuracy by selectively shrinking coefficients, thereby identifying and zeroing out non-significant predictors, a critical feature missing in many sparse modeling techniques \cite{mirzaeifard2022dynamic,mirzaeifard2022admm}.

However, current decentralized quantile regression methods using sub-gradient approaches \cite{mirzaeifard2023distributed,wang2017distributed}  often fail to effectively distinguish between active and non-active coefficients, resulting in models that are either overly dense or insufficiently precise. This gap highlights the need for a new approach capable of handling non-convex optimization complexities in a distributed setting without compromising sparsity or accuracy.

Our contribution is the decentralized smoothing ADMM (DSAD), an innovative approach designed to address these challenges within the decentralized learning framework. DSAD leverages the total variation $l_1$ norm and utilizes smoothing techniques that transform the non-convex problem into a tractable smooth approximation, enabling efficient convergence to a local minimum and consensus across nodes.

This letter's key contributions are:
\begin{itemize}
\item The introduction of DSAD, a decentralized algorithm specifically tailored for handling non-convex, non-smooth penalties in quantile regression. 
\item The demonstration of DSAD's capability to effectively manage the sparsity and accuracy challenges posed by non-convex penalties, achieving node consensus and local optimality in a decentralized setting.
\item Empirical validation of DSAD through simulations, showcasing superior performance and setting a new benchmark in distributed penalized quantile regression.
\end{itemize}
\noindent\textit{\textbf{Mathematical Notations}}: Lowercase letters represent scalars, bold lowercase letters represent column vectors, and bold uppercase letters represent matrices. The transpose of a matrix is represented by $(\cdot)^\text{T}$, and the $j$th column of matrix $\mathbf{A}$ is denoted by $\mathbf{a}_{j}$. Additionally, the entry in the $i$th row and $j$th column of $\mathbf{A}$ is denoted by $a_{ij}$. Moreover, we let $\mathbf{A}_{:,<s} \mathbf{x}_{<s} := \sum_{i<s} \mathbf{A}_{:,i} x_i$ and, similarly, $\mathbf{A}_{:,>s} \mathbf{x}_{>s} := \sum_{i>s} \mathbf{A}_{:,i} x_i$. Finally, $\partial f(u)$ represents the sub-gradient of function $f(\cdot)$ evaluated at $u$. 
\section{Problem formulation}
Consider a scalar response variable \(Y\) and a \(P\)-dimensional vector of predictors \(\mathbf{x}\). The conditional cumulative distribution function is defined as \(F_{Y}(y|\mathbf{x}) = P(Y \leq y | \mathbf{x})\), and the \(\tau\)th conditional quantile, for \(\tau \in (0,1)\), is given by $Q_Y(\tau|\mathbf{x}) = \inf\{y : F_Y(y|\mathbf{x}) \geq \tau\}$.
In quantile regression, this quantile can be linearly modeled as:
\begin{equation}
    Q_Y(\tau|\mathbf{x}) = \mathbf{x}^\text{T} \boldsymbol{\beta}_{\tau} + q_{\tau}^{\epsilon}
\end{equation}
where \(\boldsymbol{\beta}_{\tau}\) are the regression coefficients specific to quantile \(\tau\), and \(q_{\tau}^{\epsilon}\) represents the \(\tau\)th quantile of the noise \cite{koenker1982robust}.

To estimate the model parameters from a dataset \(\{\mathbf{x}_i, y_i\}_{i=1}^{n}\), where $n$ is the number of samples, and leveraging \emph{a priori} information about the model coefficients by incorporating the penalty function $P_{\lambda,\gamma}\mathopen{}\left(\mathbf{w}\right)\mathclose{}$, one can solve:
\begin{equation}
    \hat{\mathbf{w}} = \argmin_{\mathbf{w}} \frac{1}{n} \sum_{i=1}^{n} \rho_{\tau}(y_i - \mathbf{\bar{x}}_i^\text{T} \mathbf{w})+P_{\lambda,\gamma}\mathopen{}\left(\mathbf{w}\right)
\end{equation}
where \( \mathbf{\bar{x}}_i = [\mathbf{x}_i^\text{T}, 1]^\text{T} \in\mathbb{R}^{P+1} \), \( \mathbf{w} = [\boldsymbol{\beta}_{\tau}^\text{T}, q_{\tau}^{\epsilon}]^\text{T} \in\mathbb{R}^{P+1}\), and \( \rho_{\tau}(u) = \frac{1}{2}\mathopen{}\left(|u| + \mathopen{}\left( 2\tau-1\right)\mathclose{}u\right)\mathclose{} \) is the check loss function \cite{koenker1982robust}. 

To enhance sparsity more effectively than the widely used \(l_1\) norm, which can introduce estimation bias by overly shrinking large coefficients, this paper employs weakly convex penalties such as MCP \cite{fan2001variable} and SCAD \cite{zhang2010nearly}. Defined as \(P_{\lambda,\gamma}(\mathbf{w}) = \sum_{p=1}^P g_{\lambda,\gamma}(w_p)\), these penalties are specifically chosen for their ability to reduce bias and improve the selection of significant predictors while maintaining weak convexity, thus providing a superior approach to regularization.

In a distributed setting, we model a network with $L$ agents as an undirected graph $\mathcal{G}$ with vertices $\mathcal{V} = \{1,\cdots,L\}$ connected by bidirectional edges $\mathcal{E}$. Each agent $l \in \mathcal{V}$ communicates with its neighbors in $\mathcal{N}_l$, where $|\mathcal{N}_l|$ denotes the number of neighbors. The observation matrix at agent $l$ is denoted by $\mathbf{X}^{(l)} = [\bar{\mathbf{x}}^{(l)}_{1},\cdots,\bar{\mathbf{x}}^{(l)}_{M_l}]^\text{T} \in \mathbb{R}^{M_l \times (P+1)}$, and the corresponding response vector is $\mathbf{y}^{(l)} = [{y}^{(l)}_{1},\cdots,{y}^{(l)}_{M_l}]^\text{T} \in \mathbb{R}^{M_l}$. The cumulative number of measurements across all agents is $\sum_{l=1}^L M_l = n$.

Distributed penalized quantile regression aims to estimate $\mathbf{w}_l$ for each agent by solving the following formulation:
\begin{alignat}{2}
\nonumber &\min_{\{\mathbf{w}_l, \mathbf{z}_l\}_{l=1}^{L}} &\quad& \sum_{l=1}^{L} \left(\frac{1}{2} \left(\|\mathbf{z}_l\|_1 + (2\tau-1) \mathbf{1}_n^\text{T} \mathbf{z}_l\right) + n P_{\lambda,\gamma}(\mathbf{w}_l)\right)
\\ 
&\text{subject to} & & \mathbf{z}_l + \mathbf{X}^{(l)}\mathbf{w}_l = \mathbf{y}^{(l)}, \forall l \in \{1,\cdots,L\}
\\
& & & \mathbf{w}_l = \mathbf{w}_j \hspace{2mm} \forall j \in \mathcal{N}_l
\end{alignat}
Despite the potential of distributed ADMM for solving non-convex and non-smooth optimization problems, current implementations do not ensure convergence in a single-loop algorithm, which is a limitation especially when traditional methods consider smooth or convex formulations \cite{wang2019global, hong2015convergence, yashtini2020convergence, themelis2020douglas,mirzaeifard2022robust}.

The application of the total variation norm has been proposed to enforce consensus more effectively. By reformulating the problem, we incorporate a consensus regularization term:
\begin{alignat}{2}\label{eq10} \nonumber
&\min_{\{\mathbf{w}_l,\mathbf{z}_l\}_{l=1}^{L}} &\qquad& \sum_{l=1}^{L} \frac{1}{2}\mathopen{}\left(\|\mathbf{z}_l\|_1 + \mathopen{}\left( 2\tau-1\right)\mathclose{}\mathbf{1}_n^{\text{T}} \mathbf{z}_l\right)\mathclose{}\\ \nonumber
&\qquad & &+ \hspace{0.2mm}n \hspace{1mm} P_{\lambda,\gamma}\mathopen{}\left(\mathbf{w}_l\right)\mathclose{} +\omega\sum_{j\in \mathcal{N}_l,j>l}\|\mathbf{g}_{lj}-\mathbf{g}_{jl}\|_1
\\ \nonumber
&\text{subject to} & & \mathbf{z}_l+\mathbf{X}^{(l)}\mathbf{w}_l= \mathbf{y}^{(l)}, \forall i \in \{1,\cdots,L\}
\\ 
& \quad & & \mathbf{g}_{lj}=\mathbf{w}_l, \quad \mathbf{g}_{jl}=\mathbf{w}_j, \forall j \in \mathcal{N}_l, j>l
\end{alignat}
where $\omega$ is the weight of the total variation norm and $\mathbf{G} = \{\{\mathbf{g}_{lj}\}_{j \in \mathcal{N}_l, j > l}\}_{l=1}^{L}$ are auxiliary variables designed to aid in achieving a consensus. By following a smoothing approach similar to that in \cite{mirzaeifard2023smoothing}, it is possible to show that the algorithm converges to a stationary point. The challenge remains to demonstrate that each stationary point reached by the ADMM satisfies the consensus condition and is optimal.
\section{Decentralized Smoothing ADMM}
To enhance ADMM framework optimization of non-smooth functions, smoothing techniques transform a function  \( g \) into a series of smooth functions \( \tilde{g} \), suitable for locally Lipschitz environments, facilitating the optimization process via properties like continuity, differentiability, and gradient properties, ensuring \( \tilde{g}(\mathbf{x}, \mu) \) converges to \( g(\mathbf{x}) \) as \(\mu \to 0^+\) \cite{chen2012smoothing}.

Our smoothing ADMM algorithm employs these properties to approximate the $\|\mathbf{z}\|_1$ norm effectively 
by using a sum of smooth functions for each $|z_i|$ presented in  \cite{chen2012smoothing} as $h\mathopen{}\left(\mathbf{z},\mu\right)\mathclose{}=\sum_{i=1}^{n} f\mathopen{}\left(z_i,\mu\right)\mathclose{}$, where: 
\begin{equation}\label{eq12}
 f\mathopen{}\left(z_i,\mu\right)\mathclose{}=
\begin{cases}
|z_i|, &  \mu \leq |z_i|
\\
\frac{z_i^2}{2\mu}+\frac{\mu}{2}. &  |z_i| < \mu 
\end{cases}
\end{equation}
Using approximation \eqref{eq12}, the following approximate augmented Lagrangian can be derived:
\begin{multline}\label{eq13}
{\bar{\mathcal{L}}}_{\sigma_{\Psi},\sigma_{\xi},\mu}\mathopen{}\left(\mathbf{W},\mathbf{Z},\mathbf{G},\boldsymbol{\Psi},\boldsymbol{\xi}\right)\mathclose{} =  \sum_{l=1}^{L} \Bigg( \frac{1}{2}\mathopen{}h\left(\mathbf{z}_l,\mu\right)+ \tau\mathbf{1}_{M_l}^{\text{T}} \mathbf{z}_l- \\ \frac{1}{2}\mathbf{1}_{M_l}^{\text{T}} \mathbf{z}_l + M_l P_{\lambda,\gamma}\mathopen{}\left(\mathbf{w}_l\right)\mathclose{}  
+  \frac{\sigma_{\Psi}}{2}\mathopen{}\left\|\mathbf{z}_l+ \mathbf{X}^{(l)}\mathbf{w}_l- \mathbf{y}^{(l)}+\frac{\boldsymbol{\Psi}_l}{\sigma_{\Psi}}\right\|_2^2 \mathclose{} +\\
      \sum_{j\in \mathcal{N}_l,j>l}\bigg(\omega \hspace{0.5mm} h\left(\mathbf{g}_{lj}-\mathbf{g}_{jl},\mu\right)+\frac{\sigma_{\xi}}{2} \|\mathbf{w}_{i}-\mathbf{g}_{lj}\|^2_2+\frac{\sigma_{\xi}}{2}\|\mathbf{w}_{j}-\mathbf{g}_{jl}\|^2_2 \\ 
+\boldsymbol{\xi}_{lj}^{\text{T}} (\mathbf{w}_{l}-\mathbf{g}_{lj}) + \boldsymbol{\xi}_{jl}^{\text{T}} (\mathbf{w}_{j}-\mathbf{g}_{jl})\bigg)\Bigg)
\end{multline}
where $\mathbf{W}=[\mathbf{w}_1,\cdots,\mathbf{w}_L]$, $\mathbf{Z}=[\mathbf{z}_1,\cdots,\mathbf{z}_L]$, $\boldsymbol{\xi}= \{\{\boldsymbol{\xi}_{lj}\}_{j \in \mathcal{N}_l, j > l}\}_{l=1}^{L}$ and $\boldsymbol{\Psi}= \{\boldsymbol{\Psi}_{l}\}_{l=1}^{L}$ are dual variables and $\sigma_{\Psi}$ and $\sigma_{\xi}$ are penalty parameters. 

To iteratively adjust the approximation, we update the parameters $\mu$, $\sigma_{\Psi}$, $\sigma_{\xi}$ in each iteration using the following rules, where $c>0$, $d>0$ and $\beta>0$:
\begin{align}\label{eq:up:sm}
\sigma_{\Psi}^{(k+1)} = c \sqrt{k+1}, \
\sigma_{\xi}^{(k+1)} = d \sqrt{k+1}, \
\mu^{(k+1)} = \frac{\beta}{\sqrt{k+1}}.
\end{align}

The update process for each $\mathbf{w}_l$ is divided into $P+1$ sequential steps, with the $p$th element updated in its corresponding step. After simplifications,  the update formula, at each agent $l$, for the $p$th for $p\leq P$ is as follows:
\begin{align}\label{up:wp}
w_{p,l}^{\mathopen{}\left(k+1\right)\mathclose{}}= \textbf{Prox}_{ g_{\lambda,\gamma}}\mathopen{}\left(\frac{a_{p,l}}{\Upsilon^{(l)}_p};\frac{M_l}{\Upsilon^{(l)}_p}\right)\mathclose{}
\end{align}
where $\textbf{Prox}_{f}\mathopen{}\left(w;\gamma\right)\mathclose{}= \argmin_x \mathopen{}\left \{f\mathopen{}\left(x\right)\mathclose{}+\frac{1}{2\gamma}\mathopen{}\left\|x-w\right\|_2^2\mathclose{}\right \}$,
\begin{multline}\label{up:wp:a}
    a_{p,l} = -\left(\mathbf{x}^{(l)}_{:,p}\right)^\text{T} \left(\mathbf{X}^{(l)}_{:,<p}\mathbf{w}^{\mathopen{}\left(k+1\right)\mathclose{}}_{<p,l} + \mathbf{X}^{(l)}_{:,>p}\mathbf{w}^{\mathopen{}\left(k\right)\mathclose{}}_{>p,l}\right) \\
    -\left(\boldsymbol{\Psi}_{l}^{\mathopen{}\left(k\right)\mathclose{}}\right)^\text{T} \mathbf{X}^{(l)}_{:,p}
    + \sigma_{\Psi}^{\mathopen{}\left(k+1\right)\mathclose{}} \left(\mathbf{y}^{(l)} - \mathbf{z}_l^{\mathopen{}\left(k\right)\mathclose{}}\right)^\text{T} \mathbf{X}^{(l)}_{:,p} +\\
     \sum_{j \in \mathcal{N}_l, j < l} (\sigma_{\xi}^{(k+1)} g_{lj,p}^{(k)} - \xi_{lj,p}^{(k)})
    + \sum_{j \in \mathcal{N}_l, j > l} (\sigma_{\xi}^{(k+1)} g_{jl,p}^{(k)} - \xi_{jl,p}^{(k)})
\end{multline} 
and
\begin{equation}
\Upsilon^{(l)}_p=\sigma_{\Psi}^{\mathopen{}\left(k+1\right)\mathclose{}}\mathopen{}\left\|\mathbf{X}^{(l)}_{:,p}\right\|_2^2\mathclose{}+\kappa_{\xi}^{(k+1)}
\end{equation}
Both MCP and SCAD admit closed-form solutions of the proximal operator \cite{huang2012selective}. For $(P+1)$th element as the penalty function is zero we have: 
\begin{equation}\label{eq:up:w:p+1}
w_{P+1,l}=\frac{a_{P+1,l}}{\Upsilon^{(l)}_{P+1}}
\end{equation}

Next, the update of each $\mathbf{z}_l$ can be formulated as:
\begin{align}\label{eq17}
    \mathbf{z}_l^{\mathopen{}\left(k+1\right)\mathclose{}}= \argmin_{\mathbf{z}_l} \sum_{j=1}^n \frac{1}{2}f\mathopen{}\left(z_{l,j},\mu^{\mathopen{}\left(k+1\right)\mathclose{}}\right)\mathclose{} + \mathopen{}\left( \tau-\frac{1}{2}\right)\mathclose{}\mathbf{1}_{M_l}^{\text{T}} \mathbf{z}_l 
     \nonumber\\ 
   +\mathopen{}\left(\boldsymbol{\Psi}_l^{\mathopen{}\left(k\right)\mathclose{}}\right)\mathclose{}^\text{T} \mathbf{z}_l+   \frac{\sigma_{\Psi}^{\mathopen{}\left(k+1\right)\mathclose{}}}{2}\mathopen{}\left\|\mathbf{z}_l+\mathbf{X}^{(l)}\mathbf{w}_l^{\mathopen{}\left(k+1\right)\mathclose{}}- \mathbf{y}^{(l)}\right\|_2^2\mathclose{}
\end{align}
It can be shown that the update step of each $\mathbf{z}_l$ in ADMM has a closed-form solution. By grouping the last three terms of \eqref{eq17} together, a component-wise solution can be obtained as
\begin{align}\label{eq:up:z}
\mathbf{z}_i^{\mathopen{}\left(k+1\right)\mathclose{}}=\textbf{Prox}_{f\mathopen{}\left(\cdot,\mu^{\mathopen{}\left(k+1\right)\mathclose{}}\right)\mathclose{}}\mathopen{}\left(\alpha_{l,i};\frac{\sigma_{\Psi}^{\mathopen{}\left(k+1\right)\mathclose{}}}{2}\right)_{i=1}^{M_l}\mathclose{}
\end{align}
where
$\boldsymbol{\alpha}_l=\mathopen{}\left(\mathbf{y}^{(l)}-\mathbf{X}^{(l)}\mathbf{w}_l^{\mathopen{}\left(k+1\right)\mathclose{}}\right)\mathclose{}-\frac{\boldsymbol{\Psi}_l^{\mathopen{}\left(k\right)\mathclose{}}+\mathopen{}\left(\tau-\frac{1}{2}\right)\mathclose{}\mathbf{1}_{M_l}}{\sigma_{\Psi}^{\mathopen{}\left(k+1\right)\mathclose{}}}$, and \begin{equation}\label{eq19}
\textbf{Prox}_{f\mathopen{}\left(\cdot,{\mu}\right)\mathclose{}}\mathopen{}\left(x;\rho\right)\mathclose{}=
\begin{cases}
x-\rho, & x \geq  \rho + \mu
\\
\frac{z}{1+\frac{\rho}{\mu}}, & -\rho - \mu < x < \rho + \mu 
\\
x+\rho. &  x < -\rho - \mu 
\end{cases}
\end{equation}

Updates for both $\mathbf{g}_{lj}$ and $\mathbf{g}_{jl}$ for $j \in \mathcal{N}_l$, $j < l$ can be performed in parallel as:
\begin{multline}
  \begin{bmatrix}
       \mathbf{g}_{lj}^{(k+1)}\\
    \mathbf{g}_{jl}^{(k+1)}
\end{bmatrix} =\argmin_{\mathbf{g}_{lj},\mathbf{g}_{jl}}  \omega \sum_{p=1}^{P+1} f({g}_{lj,p}-{g}_{jl,p},\mu^{\mathopen{}\left(k+1\right)\mathclose{}})+\frac{\sigma_{\xi}}{2}\times \\ 
\phantom{=}\Bigg(   \left\|\mathbf{g}_{lj}-\mathbf{w}_{l}^{(k+1)}+\frac{\boldsymbol{\xi}_{jl}^{(k)}}{\sigma_{\xi}^{(k+1)}}\right\|^2_2+ \left\|\mathbf{g}_{jl}-\mathbf{w}_{l}^{(k+1)}+\frac{\boldsymbol{\xi}_{jl}^{(k)}}{\sigma_{\xi}}\right\|^2_2\Bigg)  
\end{multline}
Following the simplification provided in \cite{hallac2017network}, we get:
\begin{multline}\label{up:g}
     \begin{bmatrix}
    \mathbf{g}_{lj}^{(k+1)}\\
    \mathbf{g}_{jl}^{(k+1)}
\end{bmatrix}= \\
\phantom{=}\frac{1}{2}\begin{bmatrix}
  \mathbf{w}_{l}^{(k+1)}-\frac{\boldsymbol{\xi}_{lj}^{(k)}}{\sigma_{\xi}^{(k+1)}}+\mathbf{w}_{j}^{(k+1)}-\frac{\boldsymbol{\xi}_{jl}^{(k)}}{\sigma_{\xi}^{(k+1)}}\\
  \mathbf{w}_{i}^{(k+1)}-\frac{\boldsymbol{\xi}_{lj}^{(k)}}{\sigma_{\xi}^{(k+1)}}+\mathbf{w}_{j}^{(k+1)}-\frac{\boldsymbol{\xi}_{jl}^{(k)}}{\kappa_{\xi}^{(k+1)}}
\end{bmatrix}
+\frac{1}{2}\begin{bmatrix}
 -\mathbf{e}\\
\mathbf{e}
\end{bmatrix}
\end{multline}
where for each component \( p \in  \{1, \ldots, P+1\} \),
$
{e}_p = \textbf{Prox}_{f\left(\cdot, \mu\right)}\left(
{w}_{l,p}^{(k+1)} - \frac{{\xi}_{lj,p}^{(k)}}{\sigma_{\xi}^{(k+1)}}
- {w}_{j,p}^{(k+1)} + \frac{{\xi}_{jl,p}^{(k)}}{\sigma_{\xi}^{(k+1)}};
\frac{2\omega}{\kappa_{\xi}^{(k+1)}}
\right).
$
\begin{algorithm}[t]
 \caption{Decentralzied Smoothing ADMM (DSAD) for Penalized Quantile Regression}
 \label{alg:1}
\SetAlgoLined
Initialize $c$, $d$, $\beta$, $K$ for each node $i$, and the parameter $\tau$ and the regularized parameters $\gamma$ and $\lambda$.\;
 \For{$k=1,\cdots,K$}{
 Update $\mu^{(k+1)}$, $\sigma^{(k+1)}_{\Psi}$ and $\sigma^{(k+1)}_{\xi}$ by \eqref{eq:up:sm}\;
\For{$l=1,\cdots,L$}{
  \For{$p=1,\cdots,P$}{
  Update $w^{(k+1)}_{l,p}$ by \eqref{up:wp}\;
 }
 Update $w^{(k+1)}_{l,P+1}$ by \eqref{eq:up:w:p+1} and send $\mathbf{w}^{(k+1)}_l$ to $\mathcal{N}_l$\;
Receive $\mathbf{w}^{(k+1)}_l$ by each node $j\in \mathcal{N}_l$\;
 Update $\mathbf{z}^{(k+1)}_l$ by \eqref{eq:up:z} and $\boldsymbol{\Psi}^{(k+1)}_l$ by \eqref{eq:up:ps}\;
 }
 \For{$l=1,\cdots,L$}{
  \For{$j \in \mathcal{N}_l, j>l$}{
  Update $\mathbf{g}^{(k+1)}_{lj}$ and $\mathbf{g}^{(k+1)}_{jl}$ by \eqref{up:g}\;
  Update $\boldsymbol{\xi}^{(k+1)}_{lj}$ and $\boldsymbol{\xi}^{(k+1)}_{jl}$ by \eqref{eq:up:gl} and \eqref{eq:up:gj}\;
 }
 }
 }
\end{algorithm}

Finally, the update of each dual variable $\boldsymbol{\Psi}_l$, $\boldsymbol{\xi}_{lj}$ and $\boldsymbol{\xi}_{jl}$ as $j\in \mathcal{N}_l, l<j$  is given by
 \begin{equation}
     \label{eq:up:ps}
\boldsymbol{\Psi}_l^{\mathopen{}\left(k+1\right)\mathclose{}}=\boldsymbol{\Psi}_l^{\mathopen{}\left(k\right)\mathclose{}}+\sigma_{\Psi}^{\mathopen{}\left(k+1\right)\mathclose{}}\mathopen{}\left(\mathbf{z}_l^{\mathopen{}\left(k+1\right)\mathclose{}}+ \mathbf{X}^{(l)}\mathbf{w}_l^{\mathopen{}\left(k+1\right)\mathclose{}}- \mathbf{y}^{(l)}\right)\mathclose{}
\end{equation}
\begin{equation}\label{eq:up:gl}
\boldsymbol{\xi}^{(k+1)}_{lj}=\boldsymbol{\xi}^{(k)}_{lj}+\sigma^{(k+1)}_{\xi}\left(\mathbf{w}^{(k+1)}_l-\mathbf{g}^{(k+1)}_{lj}\right)
\end{equation} 
\begin{equation} \label{eq:up:gj}
\boldsymbol{\xi}^{(k+1)}_{jl}=\boldsymbol{\xi}^{(k)}_{jl}+\sigma_{\xi}^{(k+1)}\left(\mathbf{w}^{(k+1)}_j-\mathbf{g}^{(k+1)}_{jl}\right)
 \end{equation}
 
 The proposed ADMM-based method for solving the decentralized sparse-penalized quantile regression is summarized in Algorithm \ref{alg:1} and the convergence is established by the following theorem. 
 \begin{figure*}[t]
     \centering
     \begin{subfigure}[b]{0.3\textwidth}
         \centering
    \includegraphics[width=\textwidth]{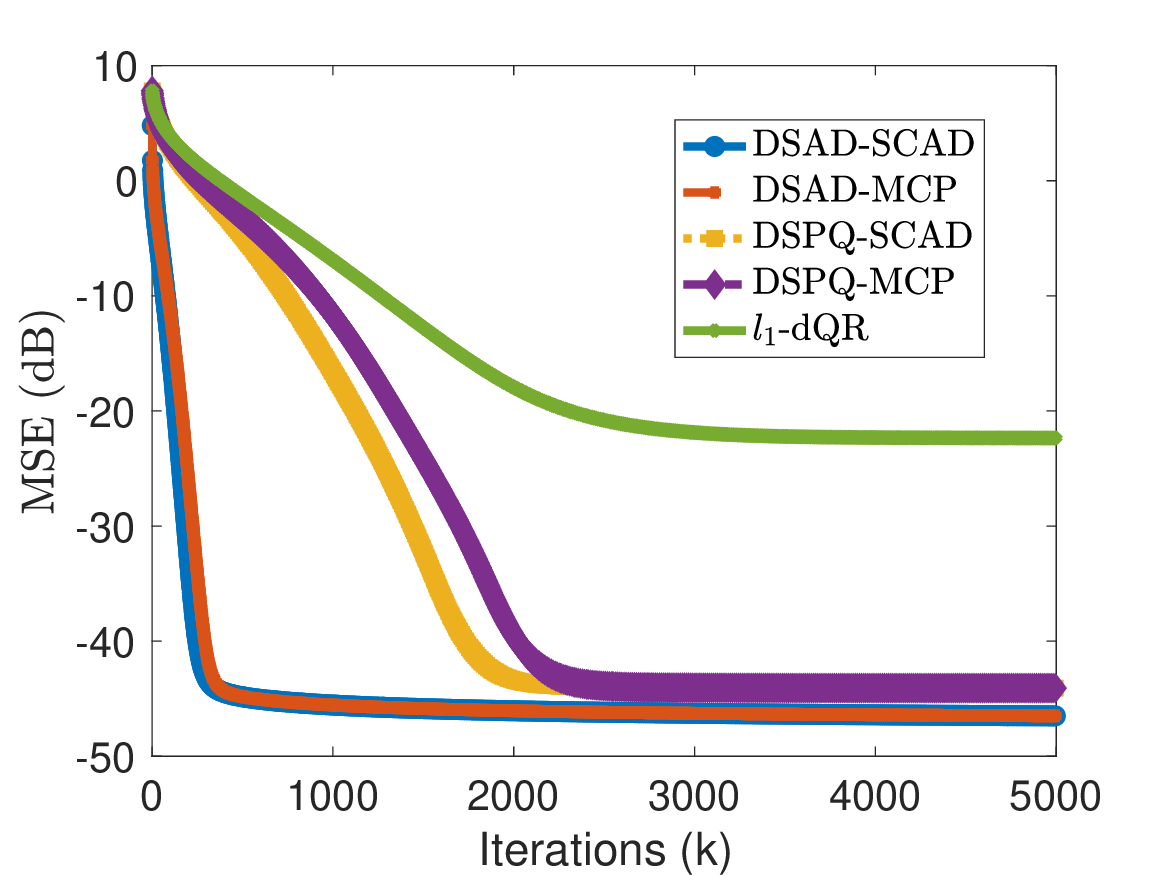}
    \caption{MSE versus iteration} \label{fig:t3}
    \label{fig1:1}
     \end{subfigure}
     \hfill
     \begin{subfigure}[b]{0.3\textwidth}
         \centering
    \includegraphics[width=\textwidth]{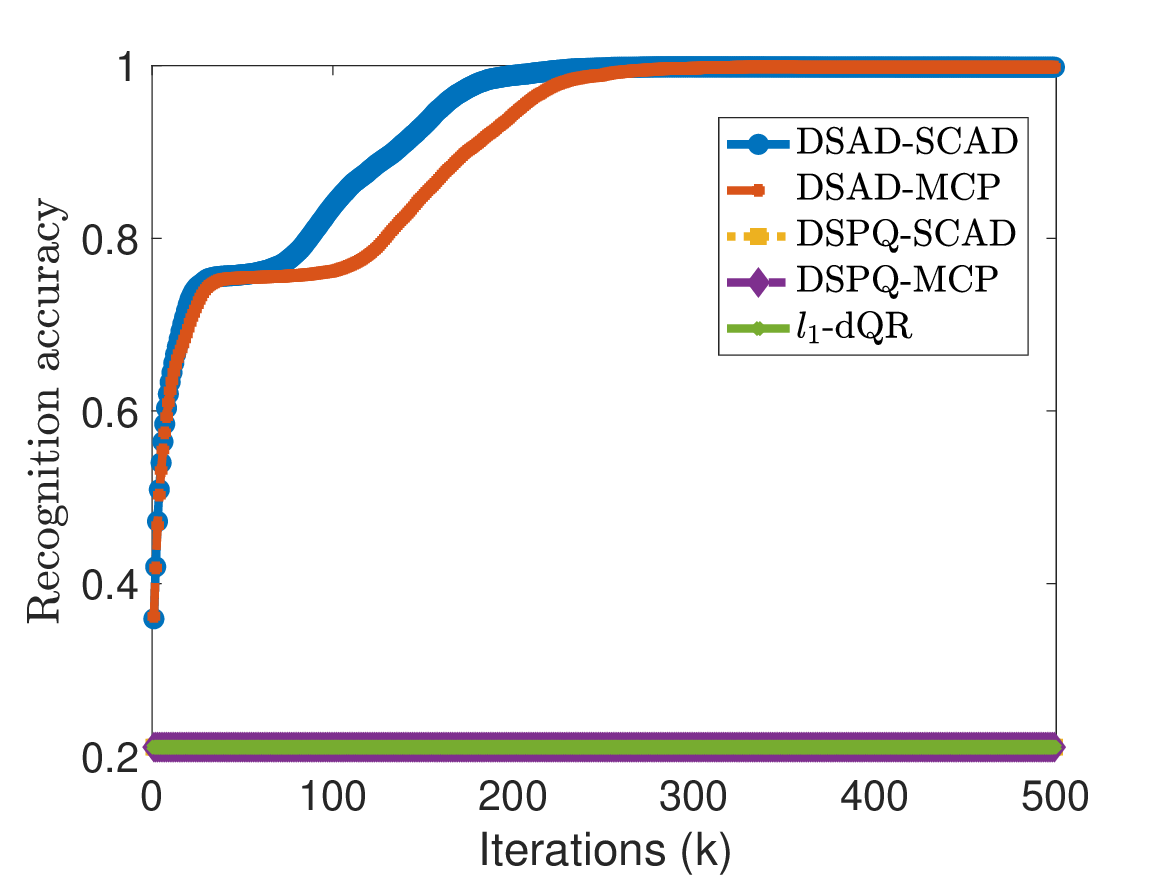}
    \caption{Recognition accuracy versus iteration} \label{fig:t5}
    \label{fig1:2}
     \end{subfigure}
     \hfill
     \begin{subfigure}[b]{0.3\textwidth}
         \centering
    \includegraphics[width=\textwidth]{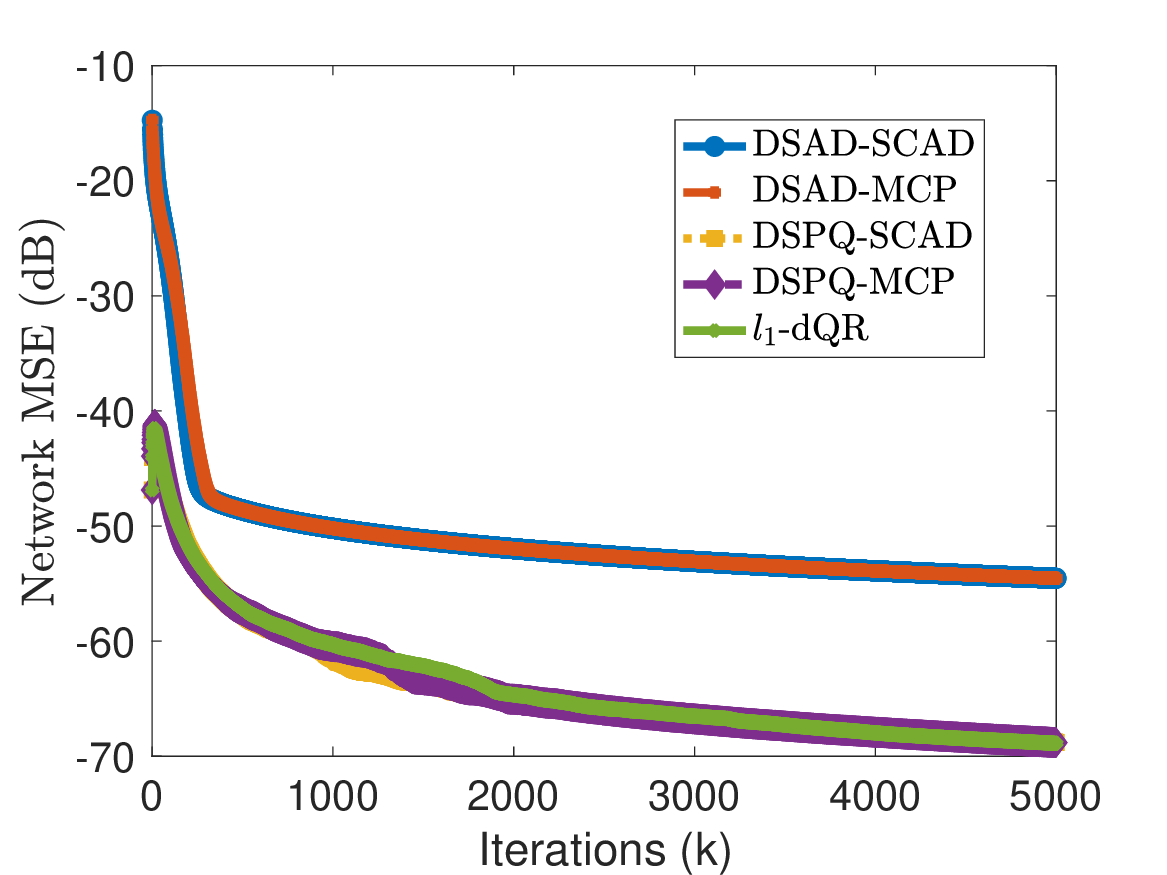}
    \caption{Network MSE versus iteration} \label{fig:t7}
    \label{fig1:3}
     \end{subfigure}
     \centering
   \caption{Performance comparison of DSAD, DSPQ, and $l_1$-dQR}
   \label{fig1}
\end{figure*}
 \begin{theorem}[Global Convergence]\label{theorem1}
Let $K$ be a constant such that for every $k \geq K$, we have  $\sigma^{(k+1)}_{\xi}>\frac{M \rho}{\min_{p,l} \|\mathbf{X}^{(l)}_{:,p}\|^2_2}$,  where 
$\rho \geq \frac{1}{\gamma}$ for MCP and $\rho \geq \frac{1}{\gamma-1}$ for SCAD represent their weakly convexity conditions. Assume each node has $M$ samples and that $\mu^{(k+1)}$, $\sigma^{(k+1)}_{\Psi}$ and $\sigma^{(k+1)}_{\xi}$ are updated according to \eqref{eq:up:sm}, where $\beta c\geq \sqrt{\frac{3}{2}}$, $\beta d \geq \sqrt{20} \hspace{1 mm} \omega$, and $\omega > \max\{\tau,1-\tau\} \max_{l\in\{1,\cdots,L\}}\|(\mathbf{X}^{(l)})^{\textbf{T}}\|_{\infty}+M\lambda$. Then, Algorithm \ref{alg:1} converges to a stationary point $\mathopen{}\left(\mathbf{w}^{*},\mathbf{z}^{*},\boldsymbol{\Psi}^{*}\right)\mathclose{}$ that satisfies the following KKT conditions:
\begin{subequations}
\begin{align}
&\mathbf{X}^{\textbf{T}}\boldsymbol{\Psi}^{*}\in n \partial P_{\lambda,\gamma}\mathopen{}\left(\mathbf{w}^{*}\right)\mathclose{} \\
& \boldsymbol{\Psi}^{*} \in \partial \sum_{i=1}^n \rho_{\tau}\mathopen{}\left(z_i^{*}\right)\mathclose{} \\
& \mathbf{z}^{*}+\mathbf{X}\mathbf{w}^{*}-\mathbf{y}=0
\end{align}
\end{subequations}
\end{theorem}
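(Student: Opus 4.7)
My plan is to adapt the single-loop smoothing ADMM convergence framework of \cite{mirzaeifard2023smoothing} to the decentralized setting with total-variation consensus, treating the smoothed augmented Lagrangian $\bar{\mathcal{L}}_{\sigma_\Psi,\sigma_\xi,\mu}$ as a Lyapunov-like function and tracking explicitly the error introduced by the moving smoothing parameter $\mu^{(k)}$ and the growing penalty parameters $\sigma_\Psi^{(k)}, \sigma_\xi^{(k)}$.

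First, I would use the first-order optimality conditions of the $\mathbf{z}_l$ and $\mathbf{g}_{lj},\mathbf{g}_{jl}$ subproblems (\eqref{eq:up:z} and \eqref{up:g}) together with the dual updates \eqref{eq:up:ps}--\eqref{eq:up:gj} to express $\boldsymbol{\Psi}_l^{(k+1)}$ and $\boldsymbol{\xi}^{(k+1)}$ as gradients of the smoothed $\ell_1$ functions at the new primal iterates. Since $\nabla_z h(\cdot,\mu)$ is $1/\mu$-Lipschitz, this yields bounds of the form $\|\boldsymbol{\Psi}_l^{(k+1)}-\boldsymbol{\Psi}_l^{(k)}\|\le \frac{C_1}{\mu^{(k+1)}}\|\mathbf{z}_l^{(k+1)}-\mathbf{z}_l^{(k)}\|+\ldots$, and analogously for $\boldsymbol{\xi}$. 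The assumption $\omega>\max\{\tau,1-\tau\}\max_l\|(\mathbf{X}^{(l)})^{\text{T}}\|_\infty+M\lambda$ is what keeps the dual norms uniformly bounded: it ensures every coordinate of the subgradient of the check loss plus penalty lies strictly inside the subdifferential of $\omega\|\cdot\|_1$, so the $\boldsymbol{\xi}_{lj}$ can absorb them.

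Next I would establish a sufficient-decrease inequality for $\bar{\mathcal{L}}$ along the block updates. For the $\mathbf{w}_l$ block, the weak-convexity constants ($\rho\ge 1/\gamma$ for MCP, $\rho\ge 1/(\gamma-1)$ for SCAD) combined with the strongly convex quadratic from $\sigma_\xi^{(k+1)}\|\mathbf{X}^{(l)}_{:,p}\|_2^2$ give overall strong convexity once $\sigma_\xi^{(k+1)}>M\rho/\min_{p,l}\|\mathbf{X}^{(l)}_{:,p}\|_2^2$, which is precisely the hypothesis on $K$; this produces a term $-\tfrac{c}{2}\|\mathbf{w}^{(k+1)}-\mathbf{w}^{(k)}\|^2$ in the descent. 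The $\mathbf{z}_l$ and $(\mathbf{g}_{lj},\mathbf{g}_{jl})$ blocks each give quadratic descent with constant proportional to $\sigma_\Psi^{(k+1)}$ and $\sigma_\xi^{(k+1)}$ respectively. The dual ascent contributes a term $+\tfrac{1}{\sigma^{(k+1)}}\|\text{dual increment}\|^2$, which by the Lipschitz bound of the previous paragraph is absorbed by the primal descent provided $\sigma_\Psi^{(k+1)}\mu^{(k+1)}$ and $\sigma_\xi^{(k+1)}\mu^{(k+1)}$ stay uniformly lower bounded --- this is exactly what $\beta c\ge\sqrt{3/2}$ and $\beta d\ge\sqrt{20}\,\omega$ encode via the updates \eqref{eq:up:sm}. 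A residual error $O(|\mu^{(k+1)}-\mu^{(k)}|)=O(k^{-3/2})$ from the change of the smoothing parameter also appears but is summable.

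Telescoping the resulting inequality over $k\ge K$, together with a lower bound on $\bar{\mathcal{L}}$ (again leveraging boundedness of the duals and weak convexity to bound the Moreau envelope from below), gives $\sum_k \|\mathbf{w}^{(k+1)}-\mathbf{w}^{(k)}\|^2<\infty$ and similarly for $\mathbf{z},\mathbf{G}$; together with the dual-primal Lipschitz relation this yields $\|\boldsymbol{\Psi}^{(k+1)}-\boldsymbol{\Psi}^{(k)}\|\to 0$ and $\|\boldsymbol{\xi}^{(k+1)}-\boldsymbol{\xi}^{(k)}\|\to 0$. Dividing the dual updates by $\sigma^{(k+1)}\to\infty$ then forces the primal feasibility residuals $\mathbf{z}_l+\mathbf{X}^{(l)}\mathbf{w}_l-\mathbf{y}^{(l)}\to 0$ and $\mathbf{w}_l-\mathbf{g}_{lj}\to 0$, so in the limit $\mathbf{g}_{lj}=\mathbf{g}_{jl}$ gives $\mathbf{w}_l=\mathbf{w}_j$ across every edge and hence global consensus $\mathbf{w}^\ast$. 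Finally, taking $k\to\infty$ in the stationarity conditions of the $\mathbf{w}$- and $\mathbf{z}$-subproblems and using $\mu^{(k)}\to 0^+$ recovers the three KKT conditions in the theorem (with the $\boldsymbol{\xi}$ terms cancelling across the consensus edges after summation).

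The main obstacle I anticipate is the third step: obtaining a descent inequality in which the error terms generated by both the time-varying $\mu^{(k)}$ (which perturbs the Lyapunov function itself at each iteration) and by the nonconvex weakly convex penalty are simultaneously controlled by the primal quadratic descent. The whole role of the precise scaling laws \eqref{eq:up:sm} and the bounds $\beta c\ge\sqrt{3/2}$, $\beta d\ge\sqrt{20}\,\omega$ is to make this balancing act work, and verifying the constants comes down to a careful but routine inequality chase after the structure above is in place.
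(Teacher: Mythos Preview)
Your overall framework --- treating the smoothed augmented Lagrangian as a Lyapunov function, bounding dual increments via the $1/\mu$-Lipschitz property of $\nabla h(\cdot,\mu)$, establishing a sufficient-decrease inequality whose primal coefficients stay positive under the scaling laws \eqref{eq:up:sm}, and telescoping to obtain square-summable primal increments --- is essentially the paper's Step~1, and your accounting of why the constants $\beta c\ge\sqrt{3/2}$, $\beta d\ge\sqrt{20}\,\omega$, and the weak-convexity threshold enter matches the roles of the descent coefficients $\eta^{(k)},\iota^{(k)},\varkappa^{(k)}$ there.

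The gap is in your consensus argument. You write that primal feasibility forces $\mathbf{w}_l-\mathbf{g}_{lj}\to 0$ and then ``in the limit $\mathbf{g}_{lj}=\mathbf{g}_{jl}$ gives $\mathbf{w}_l=\mathbf{w}_j$.'' But $\mathbf{g}_{lj}=\mathbf{g}_{jl}$ is \emph{not} a constraint in \eqref{eq10}; the difference is only penalized by $\omega\|\mathbf{g}_{lj}-\mathbf{g}_{jl}\|_1$, which by itself need not vanish at a KKT point. Primal feasibility alone yields $\mathbf{g}_{lj}^*=\mathbf{w}_l^*$ and $\mathbf{g}_{jl}^*=\mathbf{w}_j^*$, so at that stage you have merely restated what must be proved. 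The paper's Step~2 proceeds differently: from stationarity of the $\mathbf{g}$-block one obtains $\boldsymbol{\xi}_{lj}^*\in -\omega\,\partial_{\mathbf{w}_l}\|\mathbf{w}_l^*-\mathbf{w}_j^*\|_1$, while stationarity of the $\mathbf{w}_l$-block gives that the aggregate of the $\boldsymbol{\xi}^*$ over $\mathcal{N}_l$ lies in $M\,\partial P_{\lambda,\gamma}(\mathbf{w}_l^*)-(\mathbf{X}^{(l)})^{\text T}\boldsymbol{\Psi}_l^*$. Now the size condition on $\omega$ --- which you correctly flagged earlier as dominating the check-loss-plus-penalty subgradient --- is precisely what forces $\mathbf{w}_l^*=\mathbf{w}_j^*$: if some edge had $\mathbf{w}_l^*\ne\mathbf{w}_j^*$, the $\ell_1$ subgradient along that coordinate would be $\pm 1$, and $\omega$ would exceed what the right-hand side can supply. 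So the $\omega$ hypothesis is doing double duty, not only keeping duals bounded but actually \emph{enforcing} consensus through the KKT system; your final paragraph needs to route through this subgradient comparison rather than invoking a nonexistent equality constraint.
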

\begin{proof}
We establish the convergence of the proposed ADMM algorithm by validating the following two steps: 

\noindent{\em Step 1. Convergence of variables:}
  Consider a \( K' \) greater than \( K \). The overall change in the augmented Lagrangian from iteration \( K \) to iteration \( K' \) is given by:
   \begin{multline}
{\bar{\mathcal{L}}}_{\sigma_{\Psi}^{(K')},\sigma_{\xi}^{(K')},\mu^{(K')}}\mathopen{}\left(\mathbf{W}^{(K')},\mathbf{Z}^{(K')},\mathbf{G}^{(K')},\boldsymbol{\Psi}^{(K')},\boldsymbol{\xi}^{(K')}\right)\mathclose{}-\\ 
{\bar{\mathcal{L}}}_{\sigma_{\Psi}^{(K)},\sigma_{\xi}^{(K)},\mu^{(K)}}\mathopen{}\left(\mathbf{W}^{(K)},\mathbf{Z}^{(K)},\mathbf{G}^{(K)},\boldsymbol{\Psi}^{(K)},\boldsymbol{\xi}^{(K)}\right)\mathclose{}\leq \\ -S_{K'}-S'_{K'}+ D
   \end{multline}
   where \( D \) is a constant due to the effect of \( \mu \) change on the augmented Lagrangian, which is summable as outlined in \cite{mirzaeifard2023smoothing}[Lemma 9]. The term \( S_{K'}\) is defined as:
   \begin{multline}
S_{K'} = \sum_{k=K}^{K'-1} \sum_{l=1}^L \eta^{(k+1)}\|\mathbf{z}_l^{(k+1)} - \mathbf{z}_l^{(k)}\|_2^2 \\+ \varkappa^{(k+1)}\|\mathbf{w}_l^{(k+1)}-\mathbf{w}_l^{(k)}\|_2^2
\end{multline}
where \( \eta^{(k+1)} = \left(\frac{\sigma_{\Psi}^{(k+1)}}{2} - \frac{1}{2(\mu^{(k+1)})^2\sigma_{\Psi}^{(k+1)}} - \frac{\sigma_{\Psi}^{(k+1)}-\sigma_{\Psi}^{(k)}}{2(\sigma_{\Psi}^{(k)})^2(\mu^{(k)})^2}\right) \) and \( \varkappa^{(k)}= \frac{\sigma_{\Psi}^{\mathopen{}\left(k\right)\mathclose{}} \min_p \|\mathbf{X}{:,p}\|^2_2}{2M} - \frac{\rho}{2} \). Additionally, \( S_{K'} \) is characterized by:
\begin{multline}
S_{K'} = \sum_{k=K}^{K'-1} \sum_{l=1}^L \sum_{j\in \mathcal{N}_l,j>l}  \iota^{(k+1)} \|\mathbf{q}_{lj}^{(k+1)}-\mathbf{q}_{lj}^{(k)}\|^2_2
\end{multline}
where \( \iota^{(k+1)}=\left(\frac{\sigma_{\xi}^{(k+1)}}{4} - \frac{4\omega^2}{(\mu^{(k+1)})^2\sigma_{\xi}^{(k+1)}} - \frac{2\omega^2(\sigma_{\xi}^{(k+1)}-\sigma_{\xi}^{(k)})}{(\sigma_{\xi}^{(k)})^2(\mu^{(k)})^2}\right) \), and \( \mathbf{q}_{lj}^{(k+1)}=\mathbf{g}_{lj}^{(k+1)}-\mathbf{g}_{jl}^{(k+1)} \). The conditions for \( \sigma^{(k+1)}_{\Psi} \), \( \sigma^{(k+1)}_{\xi} \), and \( \mu^{(k+1)} \) ensure that each of \( \iota^{(k+1)} \), \( \eta^{(k+1)} \), and \( \varkappa^{(k+1)} \) is positive. Given that the quantile regression function and \( l_1 \)-norm are positive functions with bounded gradients, it can be concluded that the augmented Lagrangian is lower bounded. Consequently, as $K'\rightarrow \infty$, the norm of successive differences for each variable converges to zero at a rate of \( o\left(k^{-\frac{3}{4}}\right)\), ensuring stabilization at limit points.

\noindent{\em Step 2. Consensus among nodes and centralized local minimum:} Following \cite{mirzaeifard2023smoothing}[Theorem 2], the derivative of the augmented Lagrangian converges toward zero at the rate of $o\left(k^{-\frac{1}{4}}\right)$, as the convergence of variables is on the order of $o\left(k^{-\frac{3}{4}}\right)$ and the growth rate of penalty parameters is on the order of $\Omega\left(k^{\frac{1}{2}}\right)$. Moreover, as the approximation functions converge to the actual functions, we can conclude that the limit points satisfy the KKT conditions. One of the KKT conditions in each node concerns local optimality: $\sum_{j\in \mathcal{N}_l,j<l} \boldsymbol{\xi}^{*}_{jl} + \sum_{j\in \mathcal{N}_l,j>l} \boldsymbol{\xi}^{*}_{lj} \in M \partial P_{\lambda,\gamma}\left(\mathbf{w}_l^{*}\right) - (\mathbf{X}^{(l)})^{\textbf{T}}\boldsymbol{\Psi}_l^{*}$. Moreover, considering the edge equality constraints $\mathbf{g}_{lj}=\mathbf{w}_l, \quad \mathbf{g}_{jl}=\mathbf{w}_j, \forall j \in \mathcal{N}_l, j>l$ implies $\boldsymbol{\xi}^{*}_{lj} \in -\omega \partial_{\mathbf{w}_l} {\|\mathbf{w}_l^{*}-\mathbf{w}_j^{*}\|_1}$. Thus, there exists $\mathbf{d}_l \in -\sum_{j \in \mathcal{N}_l} \omega \partial_{\mathbf{w}_l} \|\mathbf{w}_l^{*}-\mathbf{w}_j^{*}\|_1$, such that $\mathbf{d}_l \in M \partial P_{\lambda,\gamma}\left(\mathbf{w}_l^{*}\right) - (\mathbf{X}^{(l)})^{\textbf{T}}\boldsymbol{\Psi}_l^{*}$. Given the condition for $\omega$ and since each $\partial_{\mathbf{w}_l} \|\mathbf{w}_l^{*}-\mathbf{w}_j^{*}\|_1=-\partial_{\mathbf{w}_j} \|\mathbf{w}_l^{*}-\mathbf{w}_j^{*}\|_1$, the consensus among all $\mathbf{w}_l^{*}$ is inevitable to satisfy the local optimally in the KKT conditions. Finally, consolidating all parts of the KKT conditions into a centralized form, the limit points also satisfy the centralized KKT conditions.
\end{proof}
\section{Simulation Results}
This section evaluates the DSAD algorithm through simulations against DSPQ \cite{mirzaeifard2023distributed} and $l_1$-dQR \cite{wang2017distributed}. We model a sensor network comprising $L=30$ nodes distributed over a $2.5 \times 2.5$ square area, with connections between nodes located within $0.8$ units of each other. Each node has between $2$ and $10$ neighbors. Each node $l$ contains $M_l=500$ data samples, with each measurement $\mathbf{x}^{(l)}_{j}$ drawn from an i.i.d normal distribution $\mathcal{N} (0,\boldsymbol{\Sigma}_{P \times P})$ where $P=18$ and $\Sigma_{pq}=0.5^{|p-q|}$. The response variable $y^{(l)}_{j}$ is modeled as $y^{(l)}_{j} = (\mathbf{x}^{(l)}_{j})^{\text{T}} \boldsymbol{\beta}+\epsilon^{(l)}_{j}$, where $\boldsymbol{\beta}$ has $3$ randomly selected active coefficients set to one, and $\epsilon^{(l)}_{j}$ is noise generated from $\mathcal{N}(0,0.2)$. The quantile of the noise is denoted by $q_\tau^\epsilon = 0.2 \cdot \Phi(\tau)$, with $\Phi(\cdot)$ as the cumulative distribution function of the standard normal distribution. Parameters utilized include $\gamma_{\text{SCAD}}=3.7$, $\gamma_{\text{MCP}}=2.4$, $\lambda=0.055$, $\tau=0.75$, $\omega = \max\{\tau,1-\tau\} \|\mathbf{X}^{\textbf{T}}\|_{\infty}+M\lambda+1$, $d=\sqrt{20}\omega$, $c=\sqrt{\frac{3}{2}}$, and $\beta=1$.

Performance metrics include mean square error (MSE), calculated as $\frac{\sum_{l=1}^L||\mathbf{\hat{w}}_l-\mathbf{w}||_2^2}{L}$, network MSE as $\frac{\sum_{l=1}^L||\mathbf{\hat{w}}_l-\bar{\mathbf{w}}||_2^2}{L}$, and recognition accuracy, defined as the proportion of correctly identified active and non-active coefficients relative to the total number of coefficients. These metrics were averaged over $100$ trials. As depicted in Fig. \ref{fig1:1}, DSAD demonstrates enhanced MSE performance with both MCP and SCAD penalties compared to DSPQ and $l_1$-dQR. Furthermore, as shown in Fig. \ref{fig1:2}, DSAD excels in identifying active and non-active coefficients, a task where other algorithms falter as they are not able to provide exact zero values for non-active coefficients. However, Fig. \ref{fig1:3} illustrates that DSPQ and $l_1$-dQR achieve faster consensus among nodes than DSAD.

\section{Conclusion}
This paper introduced an ADMM algorithm for decentralized quantile regression utilizing non-convex and non-smooth sparse penalties. The convergence of the proposed algorithm was rigorously verified. Our simulation outcomes underscore the algorithm's superiority in terms of mean squared error and recognition accuracy relative to existing methods like DSPQ and $l_1$-dQR. Notably, the algorithm demonstrates exceptional effectiveness in identifying active and non-active coefficients, significantly outperforming other algorithms in scenarios requiring precise coefficient discrimination.
\newpage

\bibliographystyle{IEEEtran}

\end{document}